

\documentclass[10pt,twocolumn,letterpaper]{article}

\usepackage{cvpr}              

\usepackage{graphicx}
\usepackage{amssymb}
\usepackage{booktabs}
\usepackage{amssymb,amsmath,amsthm}

\newtheorem{theorem}{Theorem}

\usepackage{bm}
\usepackage{color}
\usepackage{xcolor}
\usepackage{url}
\usepackage{tabularx}
\usepackage{longtable}
\usepackage{arydshln}
\usepackage{comment}
\usepackage{listings}
\lstset{basicstyle=\ttfamily}

%
\usepackage[pagebackref,breaklinks,colorlinks]{hyperref}

\usepackage[capitalize]{cleveref}
\crefname{section}{Sec.}{Secs.}
\Crefname{section}{Section}{Sections}
\Crefname{table}{Table}{Tables}
\crefname{table}{Tab.}{Tabs.}

\renewcommand{\paragraph}[1]{\hspace{-3mm}{\bf #1} \hspace{1mm}}
\newcommand{\ryoshiha}[1]{\textcolor{black}{#1}}


\begin{document}

\title{Ladder Siamese Network: a Method and Insights \\for Multi-level Self-Supervised Learning}

\author{Ryota Yoshihashi \and
Shuhei Nishimura \and
Dai Yonebayashi \and
Yuya Otsuka \and
Tomohiro Tanaka \and
Takashi Miyazaki \\
\hspace{-47mm}Yahoo Japan Corporation\\
\hspace{-45mm}{\tt\small ryoshiha@yahoo-corp.jp}
}
\maketitle

\begin{abstract}
  Siamese-network-based self-supervised learning (SSL) suffers from slow convergence
  and instability in training.
  To alleviate this, we propose a framework to exploit intermediate self-supervisions in each stage of deep nets, called the {\it Ladder Siamese Network}.
   Our self-supervised losses encourage the intermediate layers
   to be consistent with different data augmentations to single samples, which facilitates training progress and enhances the discriminative ability of the intermediate layers themselves.
   While some existing work has already utilized multi-level self supervisions in SSL, ours is different in that 1) we reveal its usefulness with non-contrastive Siamese frameworks in both theoretical and empirical viewpoints, and 2) ours improves image-level classification, instance-level detection, and pixel-level segmentation simultaneously.
   Experiments show that the proposed framework can improve BYOL baselines by 1.0\% points in ImageNet linear classification, 1.2\% points in COCO detection,
   and 3.1\% points in PASCAL VOC segmentation.
   In comparison with the state-of-the-art methods, our Ladder-based model achieves competitive and balanced performances in all tested benchmarks without causing large degradation in one. 
\end{abstract}

\begin{figure}[t]
    \centering
    \includegraphics[width=0.93\linewidth]{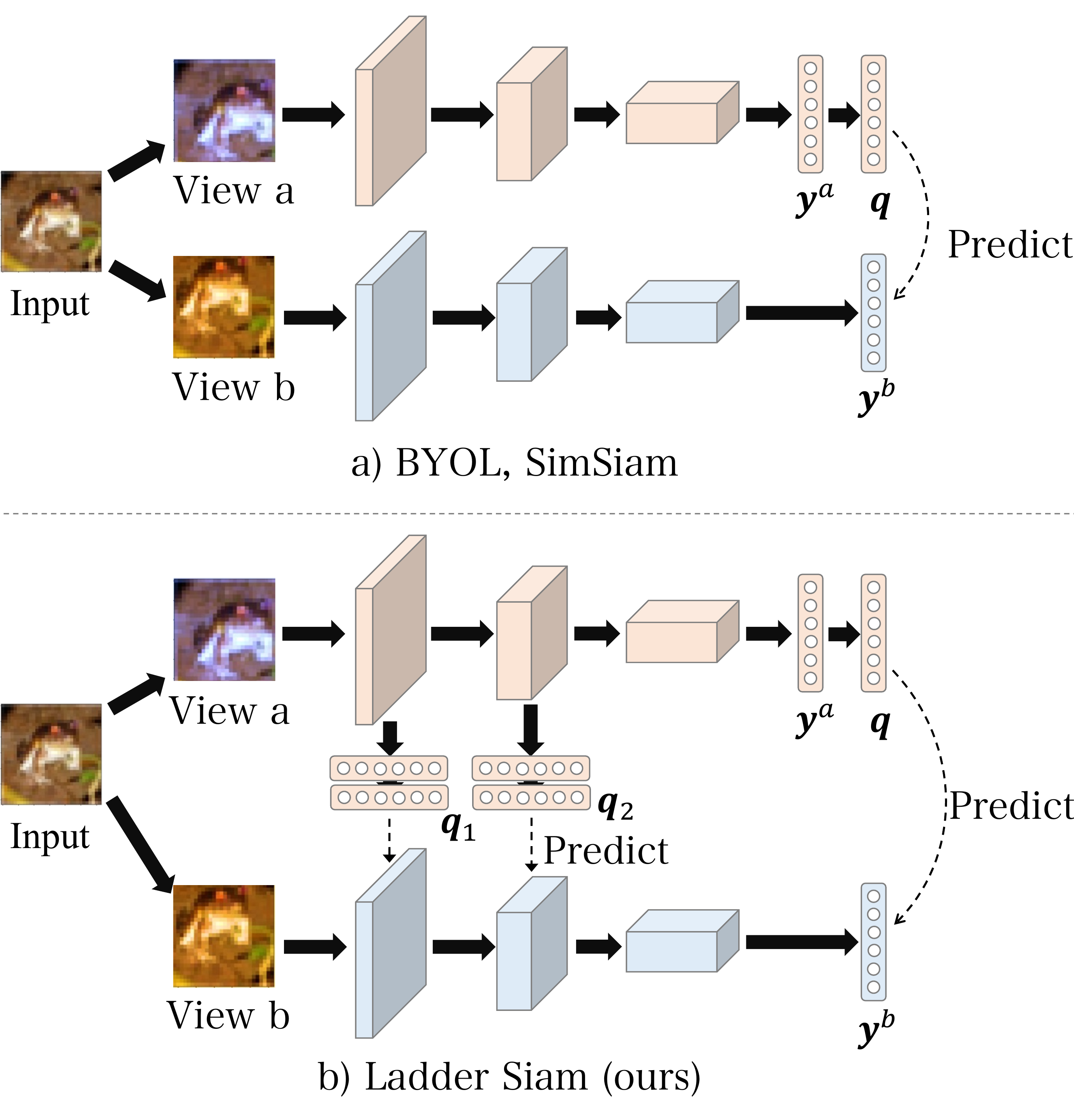}\vspace{-2mm}
    \caption{Illustration of a) existing major Siamese SSL methods and b) our Ladder Siam framework. We exploit intermediate-layer self-supervisions, to stabilize the training process and to enhance intermediate-layer reusability in downstream tasks. }\vspace{-4mm}
    \label{fig:overview}
\end{figure}

\vspace{-4mm}
\section{Introduction}\vspace{-1mm}
\label{sec:intro}
Conventional deep neural networks are notoriously {\it label-hungry},
requiring massive human-annotated training data to demonstrate their full performance \cite{sun2017revisiting}.
Self-supervised learning (SSL) \cite{chen2020simple} 
is a promising approach
to reduce this annotation dependency of deep nets, and to make machine learning more autonomous toward enabling more human-like learning mechanisms.

Among various SSL methods, one of the promising approaches is cross-view learning with Siamese networks \cite{chen2020simple,he2020momentum,grill2020bootstrap}.
This approach is further divided into two types: contrastive and non-contrastive methods.
Contrastive methods include the pioneer work SimCLR \cite{chen2020simple}, which
makes pairs of representations from the same instances similar and
pairs from different instances dissimilar.
While its training using positive and negative pairs is intuitive,
the selection of negative pairs may affect performance and it is more sensitive to the batch sizes \cite{arora2019theoretical,nozawa2021understanding}.
Non-contrastive methods, including the commonly-used BYOL \cite{grill2020bootstrap} and SimSiam \cite{chen2021exploring}, 
eliminate the necessity for negative pairs by defining
loss functions that only depend on positive pairs, which empirically improves learned representations.

A common problem in the Siamese SSL is slow convergence and instability during training.
In non-contrastive methods, existence of trivial solutions exacerbates the problem.
The non-contrastive training may fall into the trivial solutions
that map every input signal to a constant vector, which is called a {\it collapse}.
While existing studies observed that the collapse can be avoidable
in carefully designed training frameworks \cite{chen2020simple,hua2021feature,tian2021understanding},
it is still a problem in certain settings depending on model choices and training-dataset sizes \cite{li2022understanding}.

To alleviate this difficulty of the Siamese SSL,
we propose a training framework to exploit multi-level
self-supervision.
The multi-level self-supervision encourages each stage of representations in the hierarchical networks to be consistent
to different data augmentations within the same images. 
\ryoshiha{This is expected to 1) enhance the progress of training
of the earlier stages by directly exposing them to the loss,
and
2) as a side effect, improve the discriminative ability of the middle layers themselves.
One might be concerned about such an aggressive loss addition to all levels in non-contrastive frameworks known to cause the collapse.
However, from theoretical viewpoints, we argue that multiple losses added by multi-level supervisions do not 
increase the range of trivial solutions' existence, which might be counterintuitive.
\ryoshiha{Figure \ref{fig:overview} shows the overview of the architecture.}}
We name our framework {\it Ladder Siamese Network} 
after an autoencoder-based un- and semi-supervised learning method \cite{rasmus2015semi} with
multi-level losses in the pre-SSL era.

In the context of cross-view SSL, 
multi-level self supervision has been partly examined
as a component of task-specialized SSL methods.
For example, DetCo \cite{xie2021detco} is a contrastive method to exploit
multi-level self-supervision and local-patch-based training for detection.
In contrast, we explore multi-level self-supervision as a pretraining method
for generic-purpose backbones.
We found that in combination with non-contrastive losses, 
multi-level self-supervisions (MLS) can improve classification, detection, and segmentation performance
simultaneously, while DetCo had to sacrifice
classification accuracy to improve detection performance with the multi-level self-supervision.
In addition, we find that dense self-supervised losses \cite{wang2021dense,xie2021propagate}, which were to boost detection and segmentation performances at the cost of classification, can be exploited with much less classification degradation in the Ladder Siam framework \ryoshiha{than in conventional top-level-loss-only settings}.

Our contributions are summarized as follows. 
First, we propose a non-contrastive multi-level SSL framework, Ladder Siam.
Second, in experiments, we show that Ladder Siam is useful to build a representation hierarchy that maintains competitive performances to the state-of-the-art methods in classification, detection, and segmentation simultaneously.
Third, we extensively analyze the role of multi-level self-supervision in training from theoretical and experimental view points and show how the intermediate losses work to compliment the top-level supervision.
Code and pretrained weights will be released upon acceptance.

\section{Related Work}
\label{sec:related}
\vspace{-1mm}\subsection{Siamese self-supervised learning}\vspace{-2mm}
Siamese SSL was derived from the line of
instance-discrimination-based SSL \cite{dosovitskiy2014discriminative, wu2018unsupervised}.
Instance discrimination is a proxy task to
identify differently data-augmented versions of single images, which is conceptually 
simpler than the previous proxy tasks  \cite{noroozi2016unsupervised,gidaris2018unsupervised,pathak2016context}.
The early instance-discrimination method is 
parametric \cite{dosovitskiy2014discriminative} to perform a classification where each training instance is a class. Afterwards, a memory bank that stores per-instance weights \cite{wu2018unsupervised} was introduced to improve scalability by being non-parametric.
Finally, the memory bank was replaced by the Siamese-net-style
dual encoders \cite{bromley1993signature, koch2015siamese} that compute per-instance weights from the second view of the input on-the-fly \cite{chen2020simple}.

Many of the Siamese SSL methods after the pioneer SimCLR \cite{chen2020simple}
have similar overall architecture \cite{chen2021exploring} and
explore various loss functions, prediction modules, and network update rules.
For example, MoCo \cite{he2020momentum,chen2020improved,chen2021empirical} introduced momentum encoders, which are slowly updated during training to increase targets' time-consistency. SwAV \cite{caron2020unsupervised} exploits online clustering and predicts cluster assignments, rather than representation vectors themselves, to improve stability. BYOL \cite{grill2020bootstrap} adopts the asymmetric predictor,
which is on only the one side of the Siamese net, and incorporated a non-contrastive loss that does not rely on negative samples.
Further sophistication of prediction modules \cite{huang2022learning,pang2022unsupervised}, loss-function design \cite{zbontar2021barlow,bardes2022vicreg,tao2022exploring}, and data augmentation \cite{tian2020makes,miyai2022rethinking,hayase2022downstream} within the Siamese architecture is ongoing.
However, we argue that architectural changes, such as the addition of intermediate losses, have not been deeply investigated. 

Nevertheless, we are aware of a few studies that adopted multi-level self-supervisions (MLS) in Siamese SSL. 
DetCo \cite{xie2021detco} used MLS in combination with local patch-wise contrastive learning.
CsMl \cite{xu2022seed} combined MLS with nearest-neighbor-based positive-pair augmentation.
HCCL \cite{chen2021hierarchical} incorporated MLS in its deep projection heads rather than in the backbone.
Hierarchical Augmentation Invariance \cite{zhang2022rethinking}
assigned specific data augmentation types for each level to learn invariance against them.
Remarkably, all of the work presented MLS in bundles to boost the system performance after combined them with other ideas.
We instead focus on the analyses of vanilla MLS and show that even a straightforward implementation based on BYOL can outperform the preceding MLS methods.
The ideas based on MLS have also been examined in other domains, such as video \cite{yang2020hierarchical} and medical images \cite{kaku2021intermediate}.

A number of Siamese SSL methods incorporate region- or pixel-wise learning, which is useful to improve locality awareness and spatial granularity of representations.
Region-based methods often use an extra region-proposal module.
For example, DetCon \cite{henaff2021efficient} utilizes multiscale combinatorial grouping \cite{arbelaez2014multiscale}, SoCo \cite{wei2021aligning} and UniVIP \cite{li2022univip} utilize selective search \cite{uijlings2013selective},
and CYBORGS \cite{wang2022cyborgs} and Odin \cite{henaff2022object} utilize region grouping by k-means \cite{lloyd1982least}
to define region-to-region losses.
While they are effective especially in object detection, 
the usage of a handcrafted region-proposal may cause implementation complexity and loss of generality, for example, when applying to non-object image datasets such as scenes or textures.
In contrast, we explore a method that does not rely on extra modules yet can improve
detection and segmentation.

Pixel-wise methods eliminate global pooling and aim to define dense supervision.
For example, DenseCL \cite{wang2021dense} exploits the dense correspondence between feature maps. PixPro \cite{xie2021propagate} utilizes coordinate-based alignment by tracing the cropping-based data augmentation. VICRegL \cite{bardes2022vicregl} boosts the dense SSL with VIC regularization \cite{bardes2022vicreg}, and there are more studies along this line to improve matching strategy \cite{li2021dense,wang2022exploring}.
DenseSiam \cite{zhang2022densel} incorporates both dense and region-based
learning.
\ryoshiha{We incorporate DenseCL, the simplest one in our Ladder framework with a non-contrastive modification.}

\vspace{-1mm}\subsection{Intermediate-layer supervision}\vspace{-2mm}\
Intermediate-layer supervision has been examined in various areas since the advent of deep neural networks to alleviate their training difficulty.
Our direct source of inspiration is LadderNet and its variants \cite{valpola2015neural,rasmus2015semi,yoshihashi2019classification}
that perform autoencoder-based denoising \cite{vincent2010stacked} of intermediate representations as additional supervisions.
Deeply supervised nets \cite{lee2015deeply} exploits
classification losses on intermediate layers, which has been incorporated
in more supervised methods \cite{szegedy2015going,zhao2017pyramid}.
Deep contrastive supervision \cite{zhang2022deep} is a supervised learning method that exploits SSL-like contrastive losses on intermediate layers as regularizers. While it is related to our method in terms of the intermediate-loss usages, we investigates purely self-supervised settings. 
Knowledge distillation is another area where intermediate-layer supervision is common to give student models more hints to mimic teacher models \cite{romero2014fitnets, yang2021hierarchical}.
However, they use teachers trained with supervised learning and are largely different to our SSL setting, where the dual encoders are simultaneously updated.
In the broadest sense, methods that encourages reuse of intermediate layers by lateral connections \cite{ronneberger2015u,lin2017feature,misra2016cross,kawakami2019cross} could be seen as forms of intermediate-layer supervisions.
\section{Method}
\label{sec:method}

\subsection{Preparation: Siamese SSL}
First, we briefly review the Siamese SSL framework \cite{grill2020bootstrap,chen2021exploring} as a background and introduce notations.
Given an input $\bm{x}$, a deep network with $N$ stages that maps
$\bm{x}$ to the output $\bm{y}$ generally can be written as
\begin{eqnarray}\label{eqn:deepnet} \vspace{-2mm}
    \bm{y} &=& \bm{f}_N(\bm{z}_{N-1}) \nonumber \\
    \bm{z}_i &=&  \bm{f}_i(\bm{z}_{i-1}) \quad\qquad (i = 1, 2, ..., N-1) \\
    \bm{z}_0 &=& \bm{x},  \nonumber \vspace{-2mm}
\end{eqnarray}
where $\bm{f}_i$ denotes the $i$-th stage of the network and
$\bm{z}_i$ denotes the intermediate representations produced by $\bm{f}_i$.
Here, {\it stages} mean certain groups of layers in networks (i.g., conv1, res2, res3, ... in ResNets \cite{he2016deep}),
typically grouped by their resolutions and divided by downsampling layers.
The composite function
\begin{eqnarray}\label{eqn:composite} 
\bm{f}(\bm{x}) = (\bm{f}_N \circ \bm{f}_{N-1} \circ ... \circ \bm{f}_{1})(\bm{x})
\end{eqnarray}
denotes the whole network as a single function.

In supervised learning, the loss function that compares the outputs $\bm{y}$
and the annotated labels drives the training forward.
However, in SSL, we need an alternative to the label.
Here, Siamese frameworks exploit two {\it views} of single instances, which are
two versions of input images differently augmented by random transformation.
Given an input $\bm{x}$, using its two views $\bm{x}^a$, $\bm{x}^b$ and 
their corresponding outputs $\bm{y}^a = \bm{f}(\bm{x}^a)$, $\bm{y}^b = \hat{\bm{f}}(\bm{x}^b)$, a self-supervised
loss function is defined as $L(\bm{y}^a, \bm{y}^b)$.
The network for the second view $\hat{\bm{f}}$ may be identical to $\bm{f}$
\cite{chen2020simple}, or the slowly-updated version of $\bm{f}$ with momentum \cite{grill2020bootstrap}.

An example of the concrete form of $L(\bm{y}^a, \bm{y}^b)$ is the mean-square errors
(MSE) with a predictor, introduced by BYOL \cite{grill2020bootstrap}, which is denoted by
\begin{eqnarray}\label{eqn:loss}
L_{\text{BYOL}}(\bm{y}^a, \bm{y}^b) &=& |\bm{q}(\bm{y}^a) - \bm{y}^b|_2^2 ,
\end{eqnarray}
where $\bm{q}$ is an multi-layer perceptron (MLP) called a predictor. The output of the predictor is normalized.
The predictors are to give the losses asymmetricity, which is empirically beneficial for overall performances. 
\footnote{Previous work \cite{grill2020bootstrap} refers to the final-part MLP of the trained network to as the projector. While we follow the backbone-projector-predictor setting, in formulation, we include the projector in $\bm{f}$ for notation simplicity.}

In gradient-based optimization of the loss in Eq. \ref{eqn:loss}, updates of the intermediate layers $\bm{f}_{N-1}, \bm{f}_{N-2}, ..., \bm{f}_{1}$ are purely based on backpropagation,
which might be indirect. Here, our motivation is to expose the intermediate layers directly to their own learning objectives.

\subsection{Ladder Siamese Network}
To enhance learning of intermediate layers, we add losses on the basis of
the intermediate representations in deep nets.
We denote the intermediate representations corresponding to the two views $\bm{x}^a$ and $\bm{x}^b$ by $\bm{z}^a_1, \bm{z}^a_2, ..., \bm{z}^a_{N-1}$ and $\bm{z}^b_1, \bm{z}^b_2, ..., \bm{z}^b_{N-1}$ respectively.
Using these, the overall loss is defined by
\begin{eqnarray}
L_{\text{all}} = L(\bm{y}^a, \bm{y}^b) +\sum_{i=1}^{N-1} w_i L_i(\bm{z}^a_i, \bm{z}^b_i),
\end{eqnarray}
where $L$ denotes the final-layer loss and $L_i$ denotes the $i$-th
intermediate loss.
We introduce loss weights $w_i$ to control the balance between the losses.
Usual Siamese SSL can be seen as a special case of Ladder Siamese SSL where $w_1 = w_2 = ... = w_{N-1} = 0$.

For the concrete form of $L_i(\bm{z}^a_i, \bm{z}^b_i)$, we use an adaptation
of the BYOL loss (Eq. \ref{eqn:interloss}) for the intermediate layers,
which is defined by
\begin{eqnarray}\label{eqn:interloss}
L_{i} &=& |\bm{q}_i(\bm{y}^a_i) - \bm{y}^b_i|_2^2 ,\\ \nonumber
\bm{y}^k_i &=& \bm{p}_i(\text{avgpool}(\bm{z}^k_i)) \;\;\; (k=a, b).
\end{eqnarray}
This is near identical to Eq. \ref{eqn:loss}, except that
each level of the losses has its own projector $\bm{p}_i$ and predictor
$\bm{q}_i$, and global-pooling layers are added in a side-branching manner
apart from the main stream of the backbone network (but note that this is still equivalent to the BYOL loss that reuses the global pooling in the backbone network). Figure \ref{fig:predictors}a illustrates this intermediate-layer predictor.

A concern in this multi-loss setup is the number of hyperparameters, which increase the cost of hyperparameter searches for optimal training.
However, we empirically show that an easy heuristic can reduce hyperparameters by
\begin{eqnarray} \label{eqn:weights}
 w_i = 2^{i-N} w,
\end{eqnarray}
where $w$ is a loss weight-coefficient newly introduced instead of $w_1,  w_2,  ..., \text{and } w_{N-1}$. 
This is to simply halve the loss weight
for every one-stage shallower part of the network.

For implementation, we set the intermediate losses on res2, res3, and res4 in addition to the final stage res5 on ResNets \cite{he2016deep}. We do not set the loss on conv1, the first block that consists of a convolution and a pooling, because it seems too powerless to learn consistency against data augmentations.

\subsection{Dense loss for lower-layer supervision}
\begin{figure}[t]
    \centering
    \includegraphics[width=0.94\linewidth]{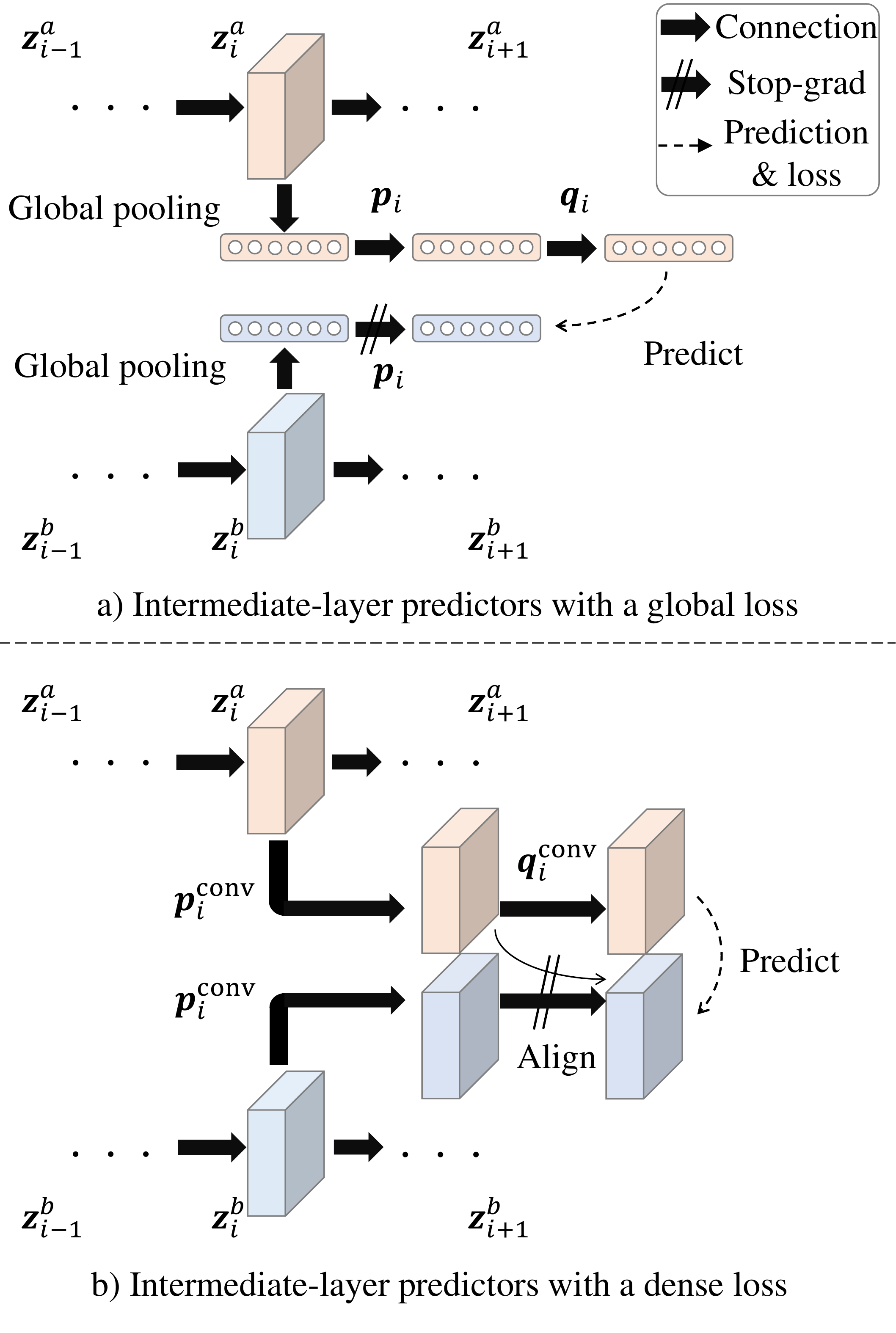}
    \vspace{-2mm}
    \caption{Structures of our intermediate predictors and losses. a) global version. b) dense version.}
    \vspace{-5mm}
    \label{fig:predictors}
\end{figure}

While a naive configuration where all-level losses are set to be the same as Eq. \ref{eqn:loss} is possible, and in a later section, we see that it is suitable for image-level classification, there is a remaining design space for varying each level loss.
We exploit this to enable the coexistence of global and local factors within single networks.

We put dense losses for lower (input-side) parts of a network, and global losses for higher (output-side)  parts.
In literature, dense losses \cite{wang2021dense,xie2021propagate,zhang2022densel} equipped with stronger locality-aware supervisory signals have advantages in object detection and
segmentation, while they degrade classification accuracies.
Here, our intention is to enhance role division of the lower layers and higher layers.
Such differentiation in hierarchical networks may naturally emerge \cite{raghu2021vision,ma2015hierarchical}, and we aim to enhance it to improve locality awareness without largely sacrificing classification accuracies.

Inspired by DenseCL \cite{wang2021dense}, we newly design the DenseBYOL loss, which is 
a non-contrastive counterpart of DenseCL designed on the basis of MoCo-style contrastive learning.
The purpose of this re-invention is to avoid potential ill effects caused by combining contrastive and non-contrastive losses, and maintain conciseness of our BYOL-based codebase.
We define our DenseBYOL loss by
\begin{eqnarray}\label{eqn:denseloss}
L_{\text{Dense}}(\bm{y}^a, \bm{y}^b) &=& |\bm{q}^\text{conv}(\bm{y}^a) - \text{align}(\bm{y}^b; \bm{y}^a)|_2^2 ,\\
\text{align}(\bm{y}^b; \bm{y}^a) &=&  [\bm{y}^b_{u,v} | (u, v) = \text{argmax}_{u, v} \langle\bm{y}^a_{i, j}, \bm{y}^b_{u, v}\rangle ]_{i, j}, \nonumber \\ 
\bm{y}^k &=& \bm{p}^{\text{conv}}(\bm{z}^k) \;\;\;\;\; (k=a, b), \nonumber
\end{eqnarray}
where $\text{\it align}$ is a spatial resampling operator that
picks up corresponding points $(u, v)$ and their feature vectors $\bm{y}^b_{u, v}$
for every $\bm{y}^a_{i, j}$ on the basis of the cosine similarity function denoted by $\langle\cdot, \cdot\rangle$.
The projector and predictor is replaced by $\bm{p}^{\text{conv}}$, $\bm{q}^{\text{conv}}$, the $1 \times 1$ convolution-based projector and predictor, which no longer require global pooling.
Figure \ref{fig:predictors}b illustrates this dense version of the predictor.
Pseudo code of Eq. \ref{eqn:denseloss} is shown in Supplementary Material.

In our Ladder Siam framework, we set the dense losses in the lower half of the network, i.e., res2 and res3 of a ResNet, and the global losses in the others (i.e., res4 and res5).
\ryoshiha{Following DenseCL \cite{wang2021dense}, we used the dense losses in combination with the global ones by averaging.}

\subsection{Do More Intermediate-layer Losses Mean More Risks of a Collapse?}
Non-contrastive Siamese learning has risks of a {\it collapse},
a phenomenon of networks falling into trivial solutions by learning
a constant function, which is useless for representation learning.
Ladder Siam adds intermediate losses, and needs to avoid the collapse of
all losses for successful training.
This might seem intuitively difficult.

However, in fact, we show that the intermediate losses do not provide
new trivial solutions in addition to the final loss.
In other words, all intermediate trivial solutions belong to the final loss's trivial solutions.
This can be formally written as follows:
\begin{theorem}
In a deep net denoted by Eq. \ref{eqn:deepnet}, 
a set of parameters $\Theta_l$ such that causes the collapse of the intermediate representation $\bm{z}_l$, is a subset of a set of parameters $\Theta_m$ such that causes the collapse of $\bm{z}_m$ when $l \le m$.
\end{theorem}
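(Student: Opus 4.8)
The plan is to reduce the statement to the elementary fact that a deterministic function of a constant is a constant, once ``collapse'' is made precise. First I would fix terminology: writing $\theta$ for a configuration of all the network weights, I say $\theta$ \emph{causes the collapse of $\bm{z}_l$}, i.e.\ $\theta \in \Theta_l$, exactly when the map $\bm{x} \mapsto \bm{z}_l$ induced by Eq.~\ref{eqn:deepnet} is constant over the support of the data distribution --- equivalently, $\bm{z}_l$ takes a single value $\bm{c}_l$ that does not depend on the input view. This is the standard notion of a trivial solution in non-contrastive Siamese learning, and it is the notion under which the intermediate objective $L_l$ of Eq.~\ref{eqn:interloss} is driven to its degenerate minimum (representation collapse forces loss collapse, since $\bm{p}_l$, the pooling, and $\bm{q}_l$ are deterministic maps; the converse is a priori weaker, which is why the theorem is phrased for $\bm{z}_l$).

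Next I would unfold Eq.~\ref{eqn:deepnet} to express a deeper representation in terms of a shallower one: for $l \le m \le N-1$ we have the factorization $\bm{z}_m = (\bm{f}_m \circ \bm{f}_{m-1} \circ \cdots \circ \bm{f}_{l+1})(\bm{z}_l)$, where the stage maps $\bm{f}_{l+1},\dots,\bm{f}_m$ are fixed once $\theta$ is fixed and introduce no further input dependence of their own. Now fix an arbitrary $\theta \in \Theta_l$. By definition $\bm{z}_l \equiv \bm{c}_l$ for some constant vector, so applying the deterministic composite above yields $\bm{z}_m = (\bm{f}_m \circ \cdots \circ \bm{f}_{l+1})(\bm{c}_l)$, again a single vector independent of the input. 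Hence $\theta$ causes the collapse of $\bm{z}_m$, i.e.\ $\theta \in \Theta_m$. Since $\theta$ was arbitrary, $\Theta_l \subseteq \Theta_m$, which is the claim; the edge case $l = m$ is immediate, and one could equally run the argument as an induction on $m - l$, propagating constancy one stage at a time through $\bm{z}_j = \bm{f}_j(\bm{z}_{j-1})$ (and, with $\bm{y} = \bm{f}_N(\bm{z}_{N-1})$, on through the final output, recovering the informal claim that every intermediate trivial solution is also a trivial solution of the top-level loss $L$).

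I expect the only real subtlety --- the ``hard part'' --- to lie in the formalization rather than in the deduction. One must (i) commit to reading collapse as constancy of the representation $\bm{z}_l$, which is the reading for which the inclusion holds and which still implies that $L_l$ collapses, and (ii) ensure this notion is stable under composition with later stages, i.e.\ that once $\bm{z}_l$ is produced the remaining computation is a genuine function of $\bm{z}_l$ alone; this is exactly the structure imposed by Eq.~\ref{eqn:deepnet}, so no extra assumption is needed (beyond the usual treatment of $\bm{f}_i$ as deterministic maps, ignoring batch-coupling effects such as batch normalization). Once these points are settled the inclusion is forced, and the intended interpretation follows at once: appending the shallow losses $L_1,\dots,L_{N-1}$ to $L$ cannot enlarge the set of trivial solutions, because any parameter collapsing a shallow representation already collapses every deeper one --- in particular the top-level representation that $L$ alone already has to guard against.
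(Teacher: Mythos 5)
Your proof is correct and follows essentially the same route as the paper's: both fix a $\theta \in \Theta_l$, factor $\bm{z}_m = (\bm{f}_m \circ \cdots \circ \bm{f}_{l+1})(\bm{z}_l)$, and observe that a deterministic map applied to a constant yields a constant, giving $\Theta_l \subseteq \Theta_m$. Your version is somewhat more careful about pinning down the definition of collapse and flagging the implicit determinism assumption, but the core argument is identical.
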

\begin{proof}[Sketch of proof]
When $l \le m$, $\bm{z}_m$ can be written using $\bm{z}_l$ and a part of the net by
\begin{eqnarray}
\bm{z}_m &=&  \bm{f}_m \circ \bm{f}_{m-1}\circ ... \circ \bm{f}_{l+1}(\bm{z}_l).
\end{eqnarray}
Given $\bm{z}_l$ collapsed,
\begin{eqnarray}
\bm{z}_l &=& \text{const.}\\ \nonumber
\Rightarrow \bm{z}_m &=&  \bm{f}_m \circ \bm{f}_{m-1}\circ ... \circ \bm{f}_{l+1}(\text{const.}) \\
&=&  \text{const.},
\end{eqnarray}
which is summarized into $\bm{z}_l = \text{const.} \Rightarrow \bm{z}_m= \text{const.} $
In the parameter space, this means
\begin{equation}
   \Theta_m = \{\bm{\theta} | \bm{z}_m = \text{const.} \} \supseteq \Theta_1 =\{\bm{\theta} | \bm{z}_l = \text{const.} \}. \nonumber
\end{equation}
\end{proof}
This means that the possible ranges of the trivial solutions have a nested structure $\Theta_1 \subseteq \Theta_{2} \subseteq \hdots \subseteq \Theta_{N}$.
Thus, we only have to avoid the collapse in $\Theta_{N}$ to avoid the collapse in all other intermediate representations, \ryoshiha{which we have already succeeded  in BYOL training.}
In experiments, we did not observe any hyperparameter setting where Ladder BYOL collapsed but BYOL did not, which agrees with this analysis.

As a limitations of this analysis, our statement is only applicable when the collapse can be complete; studies indicated that a collapse can be dimensional \cite{jing2021understanding} or partial dimensional \cite{li2022understanding}, where
the representations are not constant but strongly correlated.

\section{Experiments}
\label{sec:exp}

We evaluate Ladder Siam's effectiveness as a versatile representation learner in
various vision tasks.
Following the standard protocol in prior work \cite{he2020momentum,zhang2022densel},
we first pretrained our networks with the proposed method using the ImageNet dataset,
and then finetuned them or built classifiers on them as feature extractors with frozen parameters in the downstream tasks.

\subsection{Pretraining} \vspace{-2mm}
We pretrained Ladder Siam on the ImageNet-1k \cite{deng2009imagenet} dataset (also known as ILSVRC2012)
in unsupervised fashion, i.e., without using labels.
We used 100-epoch and 200-epoch training with cosine annealing \cite{loshchilov2016sgdr} without restart as default 
because this schedule is the most widely used.
We followed BYOL \cite{grill2020bootstrap} in other settings;
as an optimizer, we used LARS \cite{you2017large} with a batch size 4,096,  initial learning rate of 7.2, and weight decay of 0.000001.

\paragraph{Method configuration} We trained two types of Ladder Siam variations. \lstinline{Ladder-BYOL} is the simpler one where all 
intermediate losses are the BYOL-style global loss described in Eq. \ref{eqn:loss}.
We followed BYOL \cite{grill2020bootstrap} in other implementation details including the application of stop-grad and momentum encoder.
\lstinline{Ladder-DenseBYOL} is the dense-loss-equipped alternative more oriented toward dense-prediction tasks e.g., segmentation; it replaced the intermediate losses on the earlier-half stages by the dense loss described in Eq. \ref{eqn:denseloss}.
For comparisons, we additionally implemented \lstinline{DenseBYOL}, which has no intermediate losses but a top-level loss of Eq. \ref{eqn:denseloss}.
As a backbone architecture, we used ResNet50 \cite{he2016deep} as default, since it is well used and the most compatible in comparison with other methods.
We set the loss weights at res2, res3, res4, and res5 to 1/16, 1/8, 1/4, and 1, respectively, as default. The impact of the setting is investigated in a later section.

\paragraph{Hardware and time consumption} We used KVM virtual machines on our private cloud infrastructure. Each machine has eight NVIDIA A100-80GB-SXM GPUs, 252 vCPUs, and 1 TB memory.
They took around 60 hours for our 200-epoch pretraining.

\begin{table}[t]
  \caption{Results of BYOL and our Ladder-BYOL.}
  \vspace{-3mm}
  \label{table:vsbyol}
  \centering
  \begin{tabular}{c|ccc}
  & BYOL & Ladder-BYOL & \\
  \hline  \hline
  {\it 100-epoch pretraining} & & \\
  IN acc@1 & 67.4 & $\bm{68.3}$ (\textcolor[RGB]{0,150,0}{+ 0.9}) \\
  CC box mAP & 39.3 & $\bm{40.5}$ (\textcolor[RGB]{0,150,0}{+ 1.2}) \\
  VOC mIoU & 63.8 & $\bm{66.6}$ (\textcolor[RGB]{0,150,0}{+ 2.8}) \\
  \hline
  {\it 200-epoch pretraining} & & \\
  IN acc@1 & 71.7 & $\bm{72.8}$ (\textcolor[RGB]{0,150,0}{+ 1.1}) \\
  CC box mAP & 40.9 & $\bm{41.4}$ (\textcolor[RGB]{0,150,0}{+ 0.5}) \\
  VOC mIoU & 64.3 & $\bm{67.4}$ (\textcolor[RGB]{0,150,0}{+ 3.1}) \\
  \hline
  {\it 400-epoch pretraining} & & \\
  IN acc@1 & 73.1 & $\bm{73.6}$ (\textcolor[RGB]{0,150,0}{+ 0.5}) \\
  \end{tabular}
  \vspace{-5mm}
\end{table}

\begin{table*}[t]
  \caption{Performance comparison in various vision tasks by state-of-the-art SSL methods and ours. \ryoshiha{{\bf Bold} indicates the best and \underline{underline} indicates the second best results.} $\dagger$: our reproduced results using released pretrained weights. *: minor differences in pretraining settings, see main texts for details.}
  \vspace{-3mm}
  \label{tab:sota}
  \centering
  \begin{tabular}{c||c|cc|cc||c}
    \hline
    & Classification  & \multicolumn{2}{c|}{Detection in COCO}  &    \multicolumn{2}{c||}{Semantic segmentation} & Avg. rank   \\
    Method & IN linear acc. & box mAP & mask mAP & VOC mIoU & Cityscapes mIoU  \\
    \hline \hline
    ReSim \cite{xiao2021region} & 66.1 & 40.0 & 36.1 & -- & \underline{76.8} \\
    DetCo \cite{xie2021detco} & 68.6 & 40.1 & 36.4 & -- & 76.5  \\
    DenseCL \cite{wang2021dense} & 63.3 & 40.3 & 36.4 & \textbf{69.4} & 75.7 & 6.4 \\
    PixPro \cite{xie2021propagate} & 66.3* & 40.5 & 36.6  & -- & 76.3 \\
    HAI-SimSiam \cite{he2019rethinking} & {70.1} & {--} & {--} & -- & -- \\
    LEWEL-BYOL \cite{huang2022learning} & \textbf{72.8} & \underline{41.3} & \textbf{37.4} & 65.7$\dagger$ & 71.3$\dagger$ & \underline{3.4} \\
    RegionCL-SimSiam* \cite{xu2021regioncl} & 71.3 & 38.8 & 35.2 & -- & -- \\
    RegionCL-DenseCL \cite{xu2021regioncl} & 68.5 & 40.4 & 36.7 & 64.8$\dagger$ & 74.1$\dagger$ & 6.2\\
    CsMl \cite{xu2022seed} &  71.6 & 40.3 & 36.6 & -- &  -- \\
    DenseSiam \cite{zhang2022densel} & -- & 40.8 & 36.8 & -- &  \textbf{77.0} \\
    \hline
    Ladder-BYOL (ours) & \textbf{72.8} & \textbf{41.4} & \underline{37.2} & 67.4 & 73.9 & \textbf{3} \\
    Ladder-DenseBYOL (ours) & 72.0 & 41.1 & 37.0 & \underline{68.6} & 75.2 & \underline{3.4} \\
    \hline
  \end{tabular}
  \vspace{-4mm}
\end{table*}

\subsection{Downstream tasks} \vspace{-2mm}
\paragraph{Classification}
We conducted linear probing using ImageNet-1k.
Linear classifiers were trained on the frozen representation using SGD.
The training of the classifier was done during 100 epochs with cosine annealing.
We used the mmselfsup \cite{mmselfsup2021} codebase.

\paragraph{Detection}
We finetuned Mask R-CNN \cite{he2017mask} with FPN \cite{lin2017feature} on the COCO dataset \cite{lin2014microsoft}, \lstinline{train2017} for training and \lstinline{val2017} for evaluation.
Since Mask R-CNN jointly solves box-based detection and instance segmentation,
we trained single Mask R-CNN models for both box-based and mask-based evaluation. 
The training schedule was set to {\it 1$\times$ schedule},
since longer training schedules tend to make detection performances similar regardless of initialization with pretrained models or random ones \cite{he2019rethinking}.

\paragraph{Segmentation}
We finetuned FCNs \cite{long2015fully} on PASCAL VOC \cite{everingham2010pascal} and Cityscapes \cite{cordts2016cityscapes}.
While we did not see a major consensus among the SSL literature on segmentation-evaluation protocol, we used FCN-D8, which is an FCN modified to have eight-pixel stride by dilated convolutions, provided by mmsegmentation \cite{mmseg2020} as the simplest option. 
In PASCAL VOC, we used \lstinline{train_aug2012} for training. We set the input resolution to $512 \times 512$ and training iterations to 20k.
In Cityscapes, we used the \lstinline{train_fine} subset for training.
We set input resolution to $769 \times 769$ and training iterations to 40k.
This setting is the same with as that of \cite{he2020momentum, zhang2022densel}.

\subsection{Results} \vspace{-2mm}
\paragraph{Comparisons with the baseline}
We first compare our Ladder-BYOL model with BYOL, which our implementation is based on and we regard as a baseline. The results are shown in Table \ref{table:vsbyol}.
We observed improvements over the baselines with our Ladder version
on all datasets and training schedules we used.
The relative improvements were 0.9 \% points in ImageNet linear classification (IN),
1.2 \% points in COCO (CC) detection, and 2.8 \% points in VOC segmentation when we adopted 100-epoch pretraining. 
With 200-epoch pretraining, 
the improvements were 1.0 \% points in IN,
0.5 \% points in CC detection, and 3.1 \% points in VOC segmentation,
which shows that our Ladder Siam training framework is consistently beneficial in combination with BYOL.
The improvement is + 0.5 \% points in IN with the longer 400-epoch pretraining.
This relative improvement is a bit smaller than in shorter-term training, and we regard this as the result of faster convergence.

\paragraph{Comparisons with state of the art methods}
We show the results in Table \ref{tab:sota}.
We selected recent ResNet50-based SSL methods that do not rely on extra region extractors or multi-crop strategies, for which we can draw a fair comparison.
The reported scores of the compared methods are from the original papers and the DenseSiam \cite{zhang2022densel} paper, which paid great effort for fair 200-epoch-pretraining-based comparisons based on their reimplementations, unless otherwise noted.
In the same way, we reported 200-epoch results of our models.
Incidentally, as marked by * in the table, we placed the classification accuracy of the 400-epoch-pretrained model for PixPro due to the unavailability of 200-epoch weights.
We placed classification accuracy of the 100-epoch model for RegionCL-SimSiam, which
we expect to be similar to its 200-epoch results due to the fast convergence and saturation of SimSiam-based methods \cite{chen2020simple}.

Given the diversity of the downstream tasks, we do not see a single clear winner. 
However, our Ladder-BYOL maintain a balance of downstream performances at a high level by being the best in ImageNet-1k (IN) linear classification, the best in COCO (CC) box-based detection, and the second best in instance segmentation.
For example, LEWEL-BYOL \cite{huang2022learning} performed well and  similarly in classification and detection to ours, but was found to be less generalizable to segmentation.
\ryoshiha{In contrast, DenseCL \cite{wang2021dense} was the best in VOC segmentation but at the cost of classification accuracy.}
Our Ladder-DenseBYOL is the second best in VOC semantic segmentation, while it has similar but slightly worse performances in the other tasks than Ladder-BYOL. Thus, it can be regarded as a still versatile but somewhat segmentation-oriented backbone.

\paragraph{Component-wise comparisons}
We further show component-wise comparisons that focus on methods that have connections on the underlying ideas and consist of similar components to ours.
First, we compare the effect of adding intermediate losses
with DetCo \cite{xie2021detco}.
Table \ref{table:mls} shows the ImageNet classification-performance changes by adding MLS by the intermediate losses, which were provided by the original paper \cite{xie2021detco} as a part of an ablative study and computed by us. While the two DetCo variants degraded
their classification performances by MLS, ours improved in contrast.
A possible cause of this reversal is the difference of contrastive and non-contrastive losses; the contrastive 
loss used in DetCo can be bottlenecked by its reliance on negative pairs, which are sometimes too hard to distinguish from positives \cite{li2022understanding}. This might be more harmful when the losses are assigned to less powerful intermediate layers.

Table \ref{table:dense} compares the effect of incorporating dense SSL losses \cite{wang2021dense,xie2021propagate} in various base SSL methods. 
Note that PixPro and DenseCL used dense losses as their top-level supervision, but our Ladder models exploited dense losses on intermediate layers.
Regardless of base methods or dense loss types,
the addition of the dense losses degraded classification and improved segmentation in all examine conditions.
However, classification degradation in our Ladder-DenseBYOL, which is - 0.7 \%-points, is softer than in the others.
This observation suggests that dense losses as the intermediate supervision is a reasonable way to relieve classification degradation.

\begin{table}[t]
  \caption{Effect of adding multi-level supervision on ImageNet accuracy.}
  \vspace{-3mm}
  \label{table:mls}
  \centering
  \begin{tabular}{c|cc}
  & Baseline & + Multi-level sup.  \\
  \hline
  DetCo w/o GLS \cite{xie2021detco} & 64.3 & 63.2 (\color{red}{- 1.1})  \\
  DetCo w/ GLS \cite{xie2021detco} & 67.1 & 66.6  (\color{red}{- 0.5}) \\
  Ours & 71.7 & 72.8 (\textcolor[RGB]{0,150,0}{+ 1.1}) \\
  \end{tabular}
  \vspace{-3mm}
\end{table}

\begin{table}[t]
  \caption{Effect of adding dense SSL losses.}
  \vspace{-3mm}
  \label{table:dense}
  \centering
  \begin{tabular}{c|c|cc}
  & Dense & IN acc@1 & VOC  mIoU  \\
  \hline \hline
  {\it base} MoCov2 \cite{chen2020improved} & & 67.6 & 67.5 \\
  DenseCL \cite{wang2021dense} & \checkmark & {63.3} (\color{red}{- 4.3})  &{69.4} (\textcolor[RGB]{0,150,0}{+ 1.9}) \\
  \hline
  {\it base} BYOL \cite{grill2020bootstrap} & & 67.4 & 63.8 \\
  PixPro \cite{xie2021propagate} & \checkmark & 66.3 (\color{red}{- 1.1})  & {65.0}  (\textcolor[RGB]{0,150,0}{+ 1.2}) \\
  DenseBYOL & \checkmark & 65.2 (\color{red}{- 2.2})  & {65.2}  (\textcolor[RGB]{0,150,0}{+ 1.4}) \\
  \hline 
  {\it base} Ladder-BYOL & & 72.8 & {67.4} \\
  Ladder-DenseBYOL & \checkmark & 72.0 (\color{red}{- 0.8})  & {68.6}  (\textcolor[RGB]{0,150,0}{+ 1.2}) \\
  \end{tabular}
  \vspace{-7mm}
\end{table}

\begin{table}[t]
  \caption{Effect of replacing global losses with dense losses. ``G'' and ``D'' denote the usage of global and dense losses respectively. We used 100-epoch pretraining.}
  \vspace{-4mm}
  \label{table:densemls}
  \centering
  \begin{tabular}{c|cccc|cc}
  &  &  &  &  & IN & VOC   \\
   & res2 & res3 & res4 & res5 &  acc@1 &  mIoU  \\
  \hline
  BYOL & -- & -- & -- & G &  67.4  & 64.3\\
  DenseBYOL & -- & -- & -- & D & 65.2 & 65.1 \\
  Ladder-B & G & G & G & G & $\bm{68.8}$ & 66.6 \\
  Ladder-DB & D & D & G & G & 68.2 & $\bm{67.1}$ \\
   & D & D & D & D & 67.7 & 66.0 \\
  \end{tabular}
  \vspace{-3mm}
\end{table}

\begin{table}[t]
  \caption{Effect of the loss-weight hyperparameters.}
  \vspace{-3mm}
  \label{table:hparam}
  \centering
  \begin{tabular}{c|c|c|cc}
  Method & Epochs & Loss weights & Acc@1 & mAP \\
  \hline
  Ladder- & 100 & [0, 0, 0, 1] & 67.4 & 39.3 \\
  BYOL &  & [1/32, 1/16, 1/8, 1] & $\bm{68.8}$ & 40.2  \\
    &  & [1/16, 1/8, 1/4, 1] & 68.3 & 40.5\\
    &  & [1/8, 1/4, 1/2, 1] & 66.9 & $\bm{40.7}$ \\
    &  & [1/8, 1/8, 1/8, 1] & 68.5 & 40.2 \\
    \hline
  Ladder- & 200 & [0, 0, 0, 1] & 71.7 & 40.7\\
  BYOL &  & [1/32, 1/16, 1/8, 1] & 71.7 & 41.1\\
    &  & [1/16, 1/8, 1/4, 1] & $\bm{72.8}$ & $\bm{41.4}$ \\
    &  & [1/8, 1/4, 1/2, 1] & 72.5 & 40.9\\
  \end{tabular}
  \vspace{-3mm}
\end{table}

\begin{table}[t]
  \caption{Results of classification using intermediate layers.}
  \vspace{-3mm}
  \label{table:intercls}
  \centering
  \begin{tabular}{c|cccc}
  Method & res2 & res3 & res4 & res5 \\
  \hline
  BYOL & 28.5 & 40.9 & 56.8 & 68.1 \\
  Ladder-BYOL & {\bf 31.9} & {\bf 47.2} & {\bf 61.7} & {\bf 68.7} \\
  \end{tabular}
  \vspace{-3mm}
\end{table}

\begin{figure}[t]
    \centering
    \includegraphics[width=1.0\linewidth]{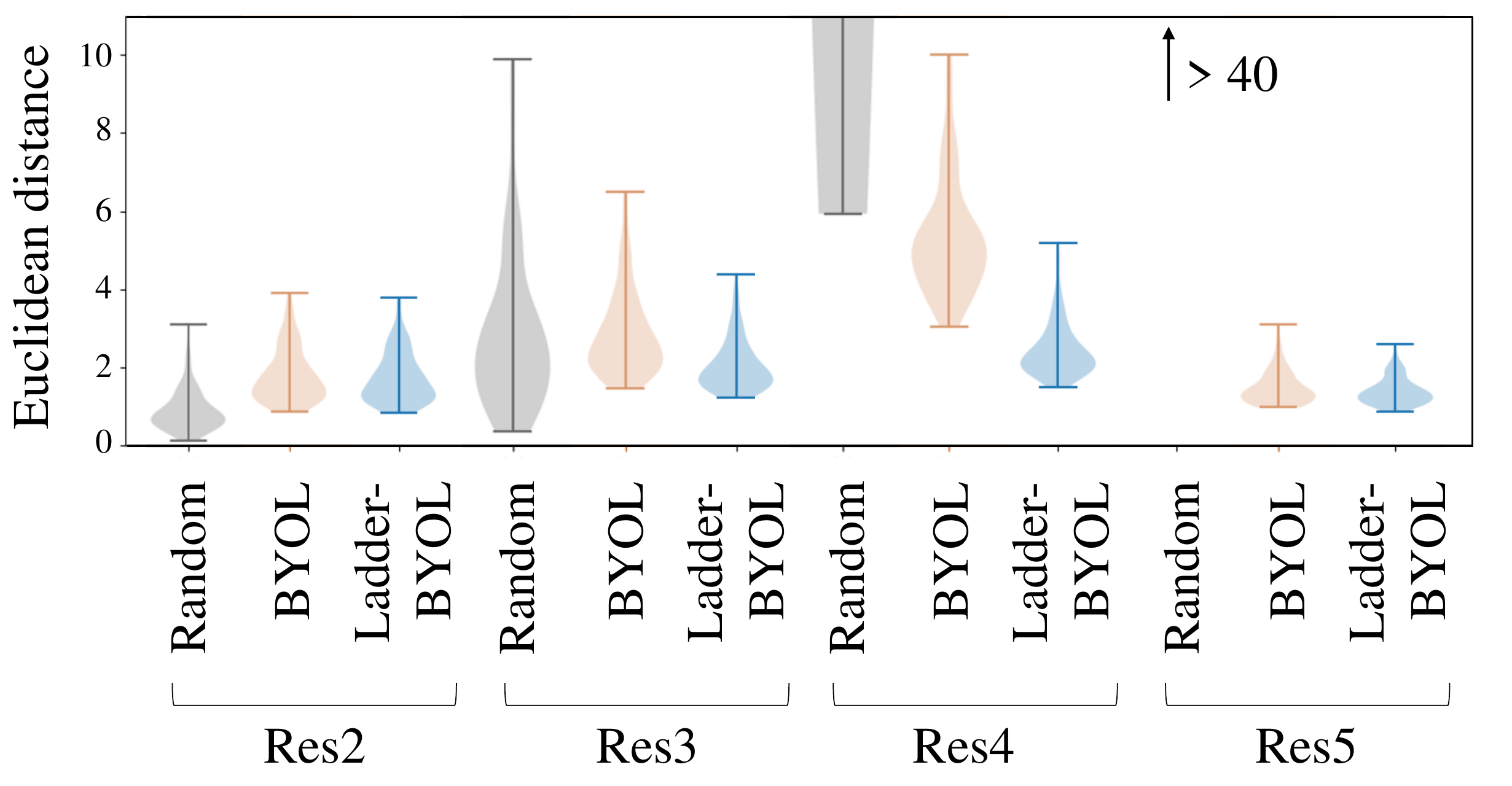}\vspace{-4mm}
    \caption{Distribution of euclidean distance between the two data-augmented views measured in each stage of the network.}\vspace{-5mm}
    \label{fig:violins}
\end{figure}

\begin{figure*}[ht]
    \begin{tabular}{cc}
    \hspace{-2mm}
      \begin{minipage}[t]{0.27\hsize}
        \centering
        \includegraphics[width=1\linewidth]{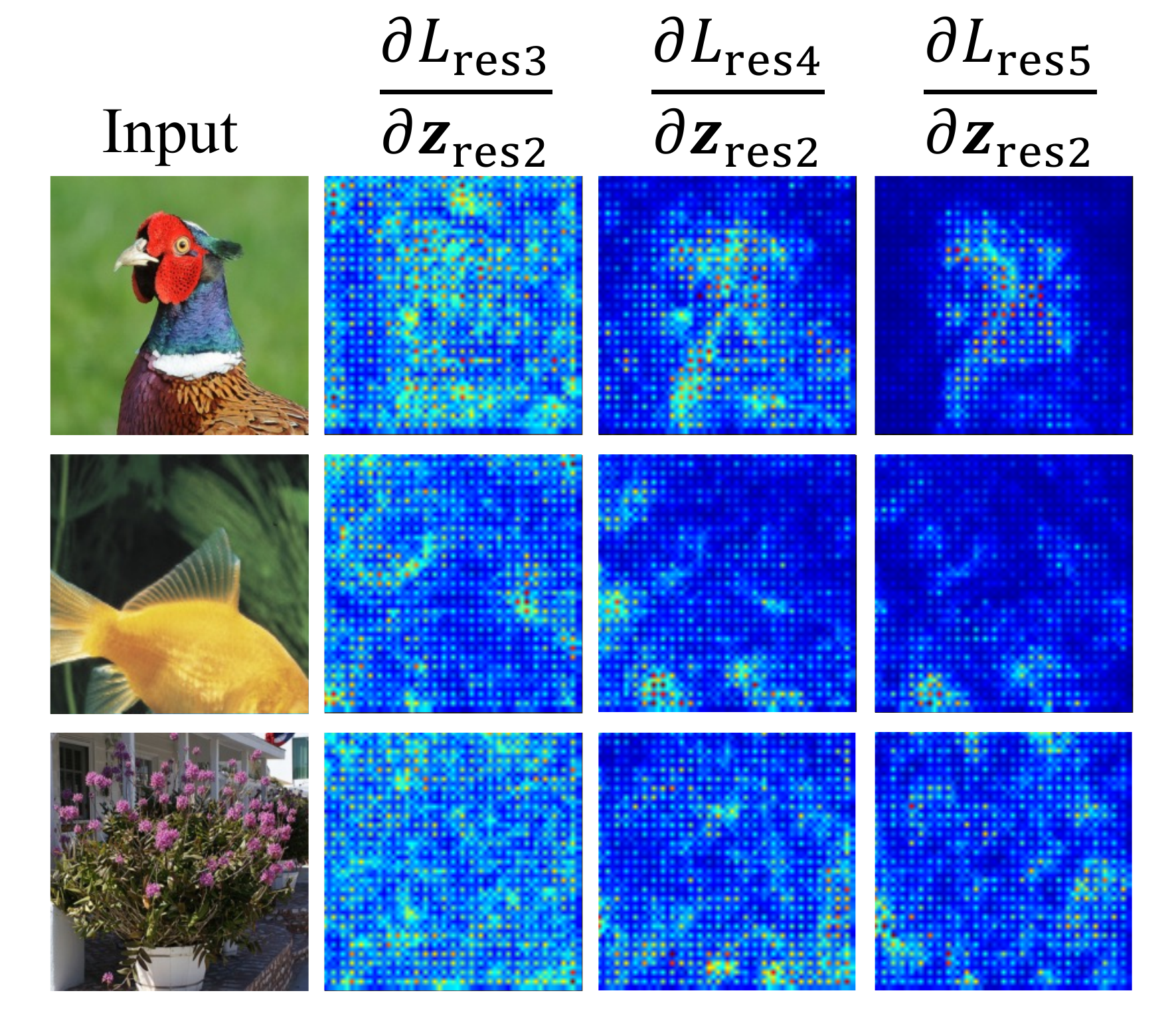}
        \vspace{-8mm}
        \caption{Visualization of gradients, i.e., supervisory signals during training at the earliest-stage provided by each intermediate loss.}
        \label{fig:grad}
        \end{minipage} 
        \hspace{3mm}
        \begin{minipage}[t]{0.67\hsize}
        \centering
        \includegraphics[width=1.05\linewidth]{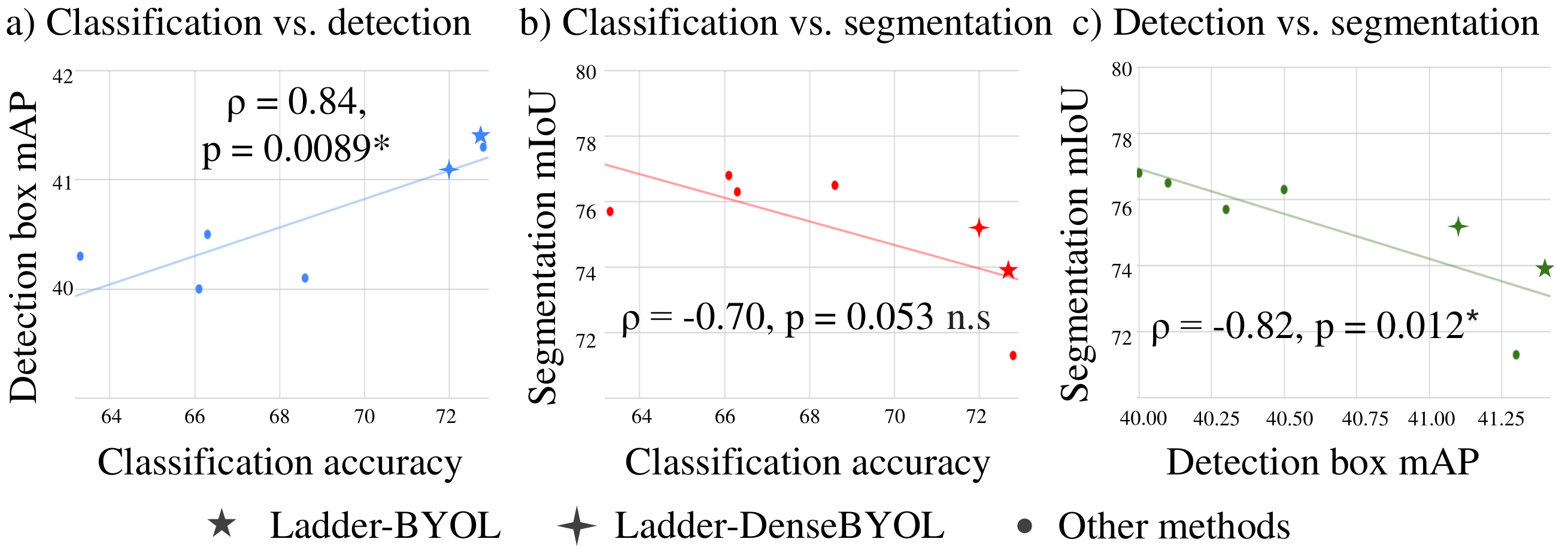}
        \caption{Correlations between the downstream performances. Classification-vs.-detection's is positive while segmentation-vs.-the-other's is negative.}
        \label{fig:meta}
        \end{minipage} 
    \end{tabular}
     \vspace{-6mm}
  \end{figure*}

\paragraph{Hyperparameters and ablations}
We conducted ablation analyses of the intermediate losses and the results are summarized Table \ref{table:densemls}.
Ladder-BYOL and Ladder-DenseBYOL were confirmed to outperform their single-loss counterparts.
We additionally tested a Ladder-DenseBYOL variation where all losses are dense, but it was suboptimal both in classification and segmentation, offering more evidence of the effectiveness of mixing dense and global losses.

Next, we investigated the impact of the intermediate-loss weights as hyperparameters. The results are summarized in Table \ref{table:hparam}.
In setting the loss weights, we followed Eq. \ref{eqn:weights}
and modified the coefficient $w$ to control the strength of the overall intermediate losses.
An interesting trend was seen in the 100-epoch pretraining; larger intermediate loss weights degraded classification and improved detection. This implies that models could be tuned to be classification-oriented or detection-oriented just by the hyperparameter. 
However, the same trend was not observed in the 200-epoch pretraining.
This might be related to the stronger convergence by longer training, which results in a single good setting rather than selectable variations.

\paragraph{Analyses on intermediate representations}
\ryoshiha{
We investigated how intermediate representations differed by direct exposure to supervisions in Ladder models.
Table \ref{table:intercls} summarizes linear probing results of intermediate representations in each stage.
We used IN classification here, and improvements in all intermediate layers were confirmed.
Figure \ref{fig:violins} shows distributions of euclidean distance between two random data-augmented views measured in each level as violin plots. The distance was computed using representation vectors after global average pooling.
Ladder training provided stronger consistency against the data augmentation to the intermediate layers lower than res4, which is seemingly the source of the improved intermediate-layer discriminability.
}

\vspace{-4mm}\paragraph{Gradient visualization}
After confirming the effectiveness of Ladder Siam, we further investigated whether the roles of each intermediate loss as a supervisor are similar or whether some sort of role divisions emerge.
We observed signs of role divisions in Fig. \ref{fig:grad}, which visualized gradients of each-level loss with reference to an intermediate representation.
Given the representation $\bm{z}_{\text{res}2}$ and the losses $L_{\text{res}3}$, $L_{res4}$ and $L_{res5}$ viewed as a function on $\bm{z}_{\text{res}2}$, we computed $\frac{\partial L_{\text{res-}i}}{\partial \bm{z}_{\text{res}2}}$,
which is the contribution of each $ L_{\text{res-}i}$ to the total gradient of $\bm{z}_{\text{res}2}$ for $i = 3, 4, 5$.
We excluded $L_{\text{res}2}$ in the visualization because the global average pooling on  $\bm{z}_{\text{res}2}$ provides spatially uniform gradients, which is improper for visualization. 
For plotting, we took the absolute-sum along the channel axis and visualized 2-D patterns of gradient magnitude.
In Fig. \ref{fig:grad}, the later-level gradients are more focused on objects while the earlier-level ones are more globally distributed on both foregrounds and backgrounds, which can be useful to widely collect learnable factors.
At the same time, the earlier losses might be non-object-centric
when disrupted by background clutters, and here we hypothesize that later-level
and earlier-level losses work complementarily together.

\paragraph{Meta-analyses: how do downstream-task performances correlate?}
Finally, we are interested in the correlation patterns seen among the various SSL methods' downstream-task performances summarized in Table \ref{fig:grad}. 
Is a good performance in one downstream task a sign of a good performance in another?
To answer this question, we conducted correlation coefficient testing over the scores in Table \ref{fig:grad} as a post-hoc analysis.
Here, we calculated Pearson's correlation coefficients of a) ImageNet linear accuracy vs. COCO detection box mAP, b) ImageNet linear accuracy vs. Cityscapes segmentation mIoU, and c) COCO detection box mAP vs. Cityscapes segmentation mIoU.
For segmentation, we selected Cityscapes as there are more available data points (i.e., reported scores).
Due to the triple comparison, we applied Bonferroni's correction \cite{dunn1961multiple,bretz2016multiple} in statistical testing.
As a result, a positive correlation with $\rho = 0.84$ was observed in the classification-detection comparison, and a negative correlation
in the classification-segmentation ($\rho = -0.70$) and detection-segmentation ($\rho = -0.82$) as shown in Fig. \ref{fig:meta}.
While a positive correlation is amenable to SSL's dogma to pursue generally reusable representations, the negative ones between Cityscapes segmentation and the others are notable in future research designs.
  
\vspace{-4mm}\section{Conclusion}\vspace{-2mm}
In this paper, we presented Ladder Siamese Network, 
conceptually simple yet effective framework to stably learn
versatile self-supervised representations.
Other than effectiveness, Ladder Siam's advantage is its flexibility
to incorporate various learning mechanisms in each level,
which may inspire more sophisticated designs of self-supervised learning objectives.
In future, we will explore further effective combinations of loss functions
such as region-proposal-based and unsupervised-segmentation-based ones 
with our Ladder Siam framework.

\paragraph{Limitations}
\ryoshiha{While Ladder Siam worked well with hierarchical representations of conv nets, its applicability to Vision Transformers \cite{dosovitskiy2020image} remains an open question.
Hierarchical Transformers \cite{wang2021pyramid,liu2021swin,heo2021rethinking} are promising in vision tasks, and they would be compatible with MLS. 
However, non-hierarchical Transformer was found to be competitive \cite{li2022exploring}.
The question of {\it whether we should apply MLS to Transformers} interacts with {\it whether Transformers should be hierarchical}, and they might need parallel consideration.}

\label{sec:conclusion}

\section*{Acknowledgements}\vspace{-2mm}
The authors would like to thank members of Tech Lab and Image-processing Group, Yahoo Japan Corporation for the helpful comments and discussion.

{\small
\bibliographystyle{ieee_fullname}
\bibliography{egbib}
}

\end{document}